\documentclass[runningheads]{llncs}


\usepackage{url}
\usepackage{times}
\usepackage{latexsym}
\usepackage{xspace}
\usepackage{amsmath,amssymb,amsfonts}
\usepackage{stmaryrd}
\usepackage{epstopdf}
\usepackage{eurosym}





%
{\qed \end{trivlist}}

\newenvironment{theorem*}[2]%
{\begin{trivlist} \item[] {\bf #1~\protect{\ref{#2}.}}\it}{\end{trivlist}}


\newcommand{\hide}[1]{}
\newcommand{\tup}[1]{\ensuremath{\langle #1 \rangle}\xspace}

\newcommand{\easycal}[1]{\ensuremath{\mathcal{#1}}\xspace}
\newcommand{\easysf}[1]{\ensuremath{\mathsf{#1}}\xspace}
\newcommand{\easyit}[1]{\ensuremath{\mathit{#1}}\xspace}

\newcommand{\easytt}[1]{\ensuremath{\mathtt{#1}}\xspace}

\urlstyle{rm}
\usepackage{enumerate}
\usepackage{pgfplots}
\pgfplotsset{compat=1.13}

\newcommand{\SPECIAL}{SPECIAL\xspace}

\usepackage[ruled,linesnumbered]{algorithm2e}

\newcommand{\SRIQ}{\easycal{SRIQ}}

\newcommand{\I}{\easycal{I}}
\newcommand{\J}{\easycal{J}}
\newcommand{\K}{\easycal{K}}

\renewcommand{\L}{\easycal{L}}
\renewcommand{\O}{\easycal{O}}

\newcommand{\T}{\easycal{T}}

\newcommand{\QL}{\easycal{QL}}

\newcommand{\NC}{\easysf{N_C}}
\newcommand{\NR}{\easysf{N_R}}
\newcommand{\NI}{\easysf{N_I}}
\newcommand{\NF}{\easysf{N_F}}

\newcommand{\func}{\easysf{func}}
\newcommand{\range}{\easysf{range}}
\newcommand{\disj}{\easysf{disj}}

\newcommand{\PL}{\easycal{PL}}
\newcommand{\SSA}{\easysf{STS}}
\newcommand{\PLR}{\easysf{PLR}}
\newcommand{\SSO}{\easysf{STS^\O}}
\newcommand{\PLRO}{\easysf{PLR^\O}}

\newcommand{\purp}{\easysf{has\_purpose}}
\newcommand{\data}{\easysf{has\_data}}
\newcommand{\proc}{\easysf{has\_processing}}
\newcommand{\recip}{\easysf{has\_recipient}}
\newcommand{\stor}{\easysf{has\_storage}}
\newcommand{\loc}{\easysf{has\_location}}
\newcommand{\dur}{\easysf{has\_duration}}
\newcommand{\duty}{\easysf{has\_duty}}

\newcommand{\splt}[2]{\ensuremath{\mathit{split}_{#2}(#1)}\xspace}

\newcommand{\sig}[1]{\Sigma(#1)}
\newcommand{\pos}{\easyit{pos}}
\newcommand{\SROIQ}{\easycal{SROIQ}}

\newcommand{\PLSO}{\easycal{PLSO}}

\renewcommand{\ni}{\easyit{\# int}}


\begin{document}
\title{Fast Compliance Checking with General Vocabularies\thanks{This research is funded by the European Union’s Horizon 2020 research and innovation programme under grant agreement N.~731601.}}
%
%
\author{Piero A.~Bonatti\inst{1,2}\orcidID{0000-0003-1436-5660} \and
Luca Ioffredo\inst{2} \and
Iliana M.~Petrova\inst{1,2}\orcidID{0000-0002-1024-2674} \and
Luigi Sauro\inst{1,2}\orcidID{0000-0001-6056-0868}
}
\authorrunning{P.\ Bonatti et al.}
%
\institute{Universit\`a di Napoli Federico II \\
  \email{\{pab,luigi.sauro\}@unina.it}\and
CeRICT\\
\url{http://www.cerict.it/}
}
\maketitle              
\begin{abstract}
We address the problem of complying with the GDPR while processing
and transferring personal data on the web. For this purpose we introduce an extensible
profile of OWL2 for representing data protection policies.  With this language, a company's data usage policy can be
checked for compliance with data subjects' consent and with a formalized fragment of
the GDPR by means of subsumption queries.  The outer structure of the
policies is restricted in order to make compliance checking highly
scalable, as required when processing high-frequency data
streams or large data volumes. However, the vocabularies for specifying policy properties
can be chosen rather freely from expressive Horn fragments of
OWL2.
We exploit IBQ reasoning to
integrate specialized reasoners for the policy language and the
vocabulary's language. Our experiments  show that this approach
significantly improves performance.
\keywords{GDPR  \and Policy language \and IBQ.}
\end{abstract}
\section{Introduction}
\label{sec:intro}

The European General Data Protection Regulation (GDPR) constrains the
use of the personal data of European citizens, no matter where the
\emph{controller} (i.e.\ the entity that collects and processes the
data) is located.  Violations may have severe consequences, such as
significant economic sanctions (4\% of worldwide turnover) and loss of
reputation. Therefore companies are looking for methodologies and
technologies that support compliance with the GDPR. The H2020 project
SPECIAL addresses these needs in several ways, including a
policy-aware framework consisting of:\footnote{For more details on the overall approach of SPECIAL, see \cite{DBLP:conf/esws/KirraneFDMPBWDR18,DBLP:conf/bigdata/BonattiK19}.}
\begin{itemize}
\item A semantic policy language for expressing: (i) \emph{business
  policies}, i.e.\ the data usage policies adopted by the controller;
  (ii) the \emph{consent} to data processing granted by the data subjects;
  (iii) an axiomatization of the ``objective'' part of the GDPR.
\item A compliance checker capable of verifying whether business
  policies are compatible with the available consent and with the
  formalized fragment of the GDPR.
\item Explanation facilities for the data subjects and for policy authors.
\end{itemize}
The semantic policy language consists of a profile of OWL2 called \PL,
and vocabularies for expressing policy properties such as the data
categories involved in the processing, the nature and purpose of the
processing, the recipients of the results, and information about where
and how long data are stored. Compliance checking is currently reduced
to subsumption checking in \PL \cite{DBLP:conf/ijcai/Bonatti18}.

While compliance with respect to the GDPR is a validation phase that
takes place before deploying business policies, compliance checks
w.r.t.\ consent occur at run-time, in such a way as to allow data
subjects to modify or withdraw their consent anytime.  Moreover, many
interesting web-based scenarios involve the processing of large
volumes of personal data  (including various forms of personal data collection, akin to tracking and fingerprinting) that require scalable, possibly real-time
compliance checking w.r.t.\ the consent policies released by all the
subjects which the data refer to.
For these reasons SPECIAL developed a scalable reasoner of \PL, called PLR, supporting
$10^3$-$10^4$ compliance checks per second.

The expressiveness of \PL is appropriate for encoding the structure of
data usage policies, licences, and even EHR; however it is too limited
for the needs of the associated vocabularies of properties,
considering that concept inclusions in \PL are restricted to class
names only.  The vocabularies for encoding data usage and GDPR
concepts are being developed independently by the \emph{Data Privacy
  Vocabularies and Controls Community Group}'' (DPVCG) of the
W3C.\footnote{\url{www.w3.org/community/dpvcg/}} We intend to put as
few constraints as possible on the development of such standardized
vocabularies, because as more application domains are introduced in
the vocabularies and more standards (e.g.\ for classifying
controllers) are imported, it is difficult to predict the
expressiveness needs that may arise in their modeling.

In this paper, we address this need by introducing a flexible method
for integrating \PL-based compliance checking with more general
vocabularies, expressed in Horn-\SRIQ, leveraging the literature on
\emph{import by query} reasoning (IBQ). The IBQ approach makes it
possible to integrate PLR with specialized engines for the logic of
the vocabularies, thereby improving the performance of the available
reasoners, and addressing the aforementioned scalability requirements.

In the next section, we start with technical preliminaries. In
Section~\ref{sec:IBQ} we apply (and slightly extend) the theory of IBQ
to integrate \PL knowledge bases with external ontologies expressed in
Horn-\SRIQ (and fragments thereof). Then, in
Section~\ref{sec:experiments}, we report our experiments.
Software  and data can be downloaded from \url{https://1drv.ms/u/s!Aple1sNCCRUesOEzvzGVRqZnS3uU0Q?e=JHzsyF}.


\section{Preliminaries}
\label{sec:prelim}

We assume the reader is familiar with the basics on Description Logics (DL) \cite{DBLP:conf/dlog/2003handbook}, here we focus only on the aspects needed for this work.
The DL languages of our interest are built from countably infinite sets of concept names (\NC), role names (\NR), individual names (\NI), concrete property names (\NF). 
A signature $\Sigma$ is a subset of $\NC\cup\NR\cup\NI\cup\NF$.
An \emph{interpretation} \I of a signature $\Sigma$ is a structure
$\I=(\Delta^\I, \cdot^\I)$ where $\Delta^\I$ is a nonempty set, and
the \emph{interpretation function} $\cdot^\I$, defined over
$\Sigma$, is such that (i)~$A^\I \subseteq \Delta^\I$ if
$A\in\NC$; (ii)~$R^\I \subseteq \Delta^\I\times\Delta^\I$ if
$R\in\NR$; (iii)~$a^\I\in\Delta^\I$ if $a\in\NI$; (iv)~$f^\I
\subseteq \Delta^\I\times \mathbb{N}$ if $f\in\NF$, where
$\mathbb{N}$ denotes the set of natural numbers.

Compound concepts and roles are built from concept names, role names, and the logical constructors listed in Table~\ref{tab:syntax-semantics}.  
We will use metavariables $A,B$ for concept names,  
$C,D$ for (possibly compound) concepts, $R,S$ for (possibly inverse) roles, $a,b$ for individual names, and $f,g$ for concrete property names.  
The third column  shows how to
extend the valuation $\cdot^\I$ of an interpretation \I to compound
expressions.  
Table~\ref{tab:syntax-semantics} also shows the terminological and assertional axioms we deal with. An interpretation \I \emph{satisfies} an axiom $\alpha$ (in symbols, $\I\models \alpha$) if it satisfies the corresponding semantic condition in Table~\ref{tab:syntax-semantics}.
As usual, $C\equiv D$ is an
abbreviation for the pair of inclusions $C\sqsubseteq D$ and
$D\sqsubseteq C$. Similarly, $\disj(C,D)$, $\range(R,C)$ and $\func(R)$ are abbreviations for $C\sqcap D\sqsubseteq \bot$, $\top\sqsubseteq \forall R.C$ and $\top\sqsubseteq (\le 1 R.\top)$, respectively.

A \emph{knowledge base} \K is a finite set of DL axioms.  Its \emph{terminological part} (or \emph{TBox}) is the set of terminological axioms in \K, while its \emph{ABox} is the set of its assertion axioms.
\vspace*{-20px}
\begin{table}[h]
    \caption{Syntax and semantics of some DL constructs and axioms.}
    \label{tab:syntax-semantics}
\vspace*{-10px}
  \begin{center}
    \leavevmode
    \footnotesize
    \begin{tabular}{p{5.0em}cp{25em}}
      \hline
      \hline
      Name &Syntax&Semantics\\
      \hline
      \hline
      \multicolumn{3}{l}{Compound expressions}
      \\
      \hline
      & &\\[-1em]
      inverse & $R^-$ & $\{ (y,x) \mid (x,y)\in R^\I \}$ \quad ($R\in\NR$)
      \\[-2pt]
      role & &
      \\
      & &\\[-1em]
      top&$\top$& $\top^\I=\Delta^\I$ \\
      & &\\[-1em]
      bottom&$\bot$& $\bot^\I=\emptyset$ \\
      & &\\[-1em]
      intersection&$C\sqcap D$&$(C\sqcap D)^\I=C^\I\cap D^\I$\\
      & &\\[-1em]
      union&$C\sqcup D$&$(C\sqcup D)^\I=C^\I\cup D^\I$\\
      & &\\[-1em]
      existential
      &$\exists R. C$& 
      $\{ d \in \Delta^\I\mid \exists (d,e) \in R^\I : e\in C^\I \}$\\[-2pt]
      restriction & &
      \\
      universal
      &$\forall R. C$& 
      $\{ d \in \Delta^\I\mid \forall (d,e) \in R^\I : e\in C^\I \}$\\[-2pt]
      restriction & &
      \\
      number & $({\bowtie}\,n\ S.C)$ & $\big\{ x\in\Delta^\I \mid
      \#\{y\mid (x,y)\in S^\I \land y\in C^\I\} \bowtie n \big\}$\quad ($\bowtie = \leq, \geq$)
      \\[-2pt]
      restrictions & &
      \\
      self & $\exists S.\mathsf{Self}$ & $\{x\in\Delta^\I \mid (x,x) \in S^\I\}$\\
      
      interval & $\exists f.[\ell,u]$ & $\{x\in\Delta^\I \mid \exists i\in [\ell,u] : (d,i)\in f^\I$\\[-2pt]
      restrictions & &
      \\ [2pt]     
%
      \hline
      \multicolumn{2}{l}{Terminological axioms}
      & \I satisfies the axiom if:
      \\
      \hline
      \\[-1em]
      GCI & $C\sqsubseteq D$ & $C^\I\subseteq D^\I$
      \\[2pt]
      role & $\disj(S_1,S_2)$ & $S_1^\I\cap S_2^\I = \emptyset$
      \\[-2pt]
      disjointness & &
      \\[2pt]
      complex & $R_1\circ\!...\!\circ R_n \sqsubseteq R$ &
      $R_1^\I\circ\ldots\circ R_n^\I \subseteq R^\I$
      \\[-2pt]
      \multicolumn{2}{l}{role inclusions} &
      \\[2pt]
%

      \hline
      \multicolumn{3}{l}{Concept and role assertion axioms  \quad ($a,b\in\NI$)}
      \\
      \hline
      \\[-1em]
      conc.\ assrt. & $C(a)$ & $a^\I \in C^\I$
      \\
      role assrt. & $R(a,b)$ & $(a,b)^\I \in R^\I$
      \\[2pt]
      \hline 

    \end{tabular}
    \end{center}  
\end{table}
\vspace*{-20px}
%

If $X$ is a DL expression, an axiom, or a knowledge base, then $\sig{X}$ denotes the signature consisting of all symbols occurring in $X$.  An interpretation \I of a signature $\Sigma\supseteq\sig{\K}$ is a \emph{model} of \K (in symbols, $\I\models\K$) if \I satisfies all the axioms in \K. We say that \K \emph{entails} an axiom $\alpha$ (in symbols, $\K\models\alpha$) if all the models of \K satisfy $\alpha$.
Given a knowledge base $K$ and general concept inclusion (GCI) 
$C\sqsubseteq D$, the \emph{subsumption problem} consists in deciding whether $\K\models C\sqsubseteq D$.
A \emph{pointed interpretation} is a pair $(\I,d)$ where $d\in\Delta^\I$. We say $(\I,d)$ \emph{satisfies} a concept $C$  iff $d\in C^\I$.  In this case, we write $(\I,d)\models C$.

It is straightforward to see that any knowledge base $\K$ defined on the base of Table~\ref{tab:syntax-semantics} satisfies the   
\emph{disjoint model union property}, that is, if two disjoint interpretations \I and \J satisfy \K, their disjoint union $\I\uplus\J=\tup{\Delta^{\I}\uplus\Delta^{\J},\cdot^{\I\uplus \J}}$ -- where $P^{\I\uplus\J}=P^\I \uplus P^\J$ for all $P\in \NC\cup\NR\cup\NF$ --  satisfies \K, too (\cite{DBLP:conf/dlog/2003handbook}, Ch.~5). This result can be easily extended to the union $\biguplus S$ of an arbitrary set $S$ of disjoint models. 

A knowledge base $\K$ is \emph{semantically modular} with respect to a signature $\Sigma$ if each interpretation $\I=(\Delta^\I, \cdot^\I)$ over $\Sigma$ can be extended to a model $\J=(\Delta^\J, \cdot^\J)$ of $\K$ such that $\Delta^\J=\Delta^\I$ and $X^\J=X^\I$, for all symbols $X\in \Sigma$. Roughly speaking, this means that \K does not constrain the symbols of $\Sigma$ in any way.
A special case of semantic modularity exploited in  \cite{DBLP:conf/ijcai/GrauMK09} is \emph{locality}: 
  A knowledge base $\K$ is \emph{local} with respect to a signature
  $\Sigma$ if each interpretation $\I=(\Delta^\I, \cdot^\I)$ over
  $\Sigma$ can be extended to a model $\J=(\Delta^\J, \cdot^\J)$ of
  $\K$ by setting $X^\I=\emptyset$ for all concept and role names
  $X\in \Sigma(\K)\setminus\Sigma$.

Finally, a Horn-\SRIQ knowledge base 
\cite{DBLP:conf/kr/OrtizRS10,DBLP:conf/ijcai/OrtizRS11} consists of terminological and assertional axioms from  Table~\ref{tab:syntax-semantics} satisfying the following restrictions: 
(i) the set of role axioms should be
\emph{regular} and the roles $S,S_1,S_2$ \emph{simple}, according to
the definitions stated in \cite{DBLP:conf/kr/HorrocksKS06}\footnote{The definitions are omitted because they are not needed in our results.}, and (ii) GCIs have the following normal form:
\begin{equation*}
C_1 \sqcap C_2 \sqsubseteq D\quad\quad
\exists R.C \sqsubseteq D	\quad\quad
C \sqsubseteq \forall R.D	
\end{equation*}
\vspace*{-20px}
\begin{equation*}
C \sqsubseteq \exists R.D\quad\quad
C \sqsubseteq {} (\leq 1\ S.D)\quad\quad
C \sqsubseteq {} (\geq n\ S.D)\, ,
\end{equation*}
where $C,C_1,C_2,D$ either belong to $\NC \cup \{\bot,\top\}$
or are of the form $\exists S.\mathsf{Self}$, and    
$S$ is a \emph{simple} role. 
%
%
%
Like all Horn DLs, Horn-\SRIQ is \emph{convex}, that is, $\K\models C_0\sqsubseteq C_1\sqcup C_2$ holds iff either $\K\models C_0\sqsubseteq C_1$ or  $\K\models C_0\sqsubseteq C_2$.

\section{Semantic Encoding of Data Usage Policies}
\label{sec:enc}

SPECIAL's policy language \PL\ is a fragment of OWL2-DL that has been
specifically designed to describe data controller/subject usage policies and to model
selected parts of the GDPR that can be used to support the validation
of the controller's internal processes.  
%
The aspects of data usage that have legal relevance are clearly
indicated in several articles of the GDPR and in the available
guidelines:
\begin{itemize}
\item reasons for data processing (purpose);
\item which data categories are involved;
\item what kind of processing is applied to the data;
\item which third parties data are distributed to (recipients);
\item countries in which the data will be stored (location);
\item time constraints on data erasure (duration).
\end{itemize}

\SPECIAL adopts a direct encoding of usage policies in description
logics, based on those features. The simplest possible policies have the form:
{
\begin{equation}
  \label{pol1}
  \renewcommand{\arraystretch}{1}
  \begin{array}{l}
    \exists \purp.P \sqcap \exists \data.D \sqcap \exists \proc.O
    \sqcap \exists \recip.R \sqcap {}\\
    ~~ \exists \stor(\exists \loc.L \sqcap \exists \dur.[t_1,t_2]) \,.
  \end{array}
\end{equation}%
}%
All of the above roles are functional.
Duration is represented as an interval of integers
$[t_1,t_2]$, representing a minimum and a maximum storage time (such
bounds may be required by law, by the data subject, or by the controller itself).
The classes $P$, $D$, $O$, etc.\ are defined in suitable
\emph{auxiliary} knowledge bases that specify  the
relationships between different terms.  

A policy of the form (\ref{pol1}) may represent the conditions under which  a data subject gives her consent. For example if $D=\mathsf{DemographicData}$ then the data subject authorizes -- in particular -- the use of her address, age, income, etc.\ as specified by the other properties of the policy.

The usage policies that are applied by the data controller's
business processes are called \emph{business policies} and include a
description of data usage of the form (\ref{pol1}).  Additionally,
each business policy is labelled with its legal basis and describes
the associated obligations that must be fulfilled. For example, if the
data category includes personal data, and processing is allowed by
explicit consent, then the business policy should have the additional
conjuncts:
{
  \begin{equation}
  \label{pol3}
  \renewcommand{\arraystretch}{1}
  \begin{array}{l}
    \exists \mathsf{has\_legal\_basis. Art6\_1\_a\_Consent} \sqcap {} \\
    ~ ~
    \exists\duty.\mathsf{GetConsent} \sqcap \exists\duty.\mathsf{GiveAccess} \sqcap {}\\
    ~ ~
    \exists\duty.\mathsf{RectifyOnRequest} \sqcap 
    \exists\duty.\mathsf{DeleteOnRequest} 
  \end{array}
  \end{equation}%
}%

\noindent
that label the policy with the chosen legal basis from Art. 6 GDPR, and model the obligations related to the data subjects' rights, cf.\ Chapter 3 of the GDPR. More precisely, the terms involving \duty assert that the process modelled by the business policy includes the operations needed to obtain the data subject's consent ($\exists\duty.\easysf{GetConsent}$) and those needed to receive and apply the data subjects' requests to access, rectify, and delete their personal data.
%

Data controllers and subjects may specify different policies for  different categories of data and different purposes. The result is a \emph{full policy}
$P_1 \sqcup \ldots \sqcup P_n$ where each  $P_i$ is a
\emph{simple usage policies} like (\ref{pol1}) or (\ref{pol1}) $\sqcap$ (\ref{pol3}) (one for each usage type).

Simple usage policies are formalized by \emph{simple \PL concepts}, that are defined by the following grammar,
  where $A\in\NC$, $R\in\NR$, and $f\in\NF:$
  $$C,D::=A\mid \bot\mid \exists f.[l,u] \mid \exists R.C \mid C \sqcap D \,.$$
  A \emph{(full) \PL concept} is a union $D_1\sqcup\ldots\sqcup D_n$
  of simple \PL concepts ($n\geq 1$).

In order to check whether a business process complies with the consent
given by a data subject $S$, it suffices to check whether the
corresponding business policy \easyit{BP} is subsumed by the consent
policy of $S$, denoted by \easyit{CP_S} (in symbols,
\easyit{BP\sqsubseteq CP_S}). This subsumption is checked against a
knowledge base that encodes type restrictions related to policy
properties.  
Such a knowledge base is the union of a \emph{main} knowledge base $\K$ that specifies the semantics of general terms, such as $\purp$ or $\data$,  plus the \emph{auxiliary} knowledge base $\O$ that models the different types of data, purposes, recipients, etc. according to a specific application domain. 
An example of the actual axioms occurring in $\K$ is:
\[
\renewcommand{\arraystretch}{1}
\begin{array}{ll}
  \easysf{\func(\purp)} &   \easysf{\range(\purp,AnyPurpose)} \\
  \easysf{\func(\data)} &    \easysf{\range(\data,AnyData)} \\
  \easysf{\disj(AnyData, AnyPurpose)}\, .
\end{array}
\]

\noindent
Formally, we assume that a \emph{main} \PL\ knowledge base $\K$ is a set of axioms of the following kinds:
  \begin{itemize}
  \item $\func(R)$ where $R$ is a role name or a concrete property;
  \item $\range(S,A)$ where $S$ is a role and $A$ a concept name;
  \item $A\sqsubseteq B$ where $A,B$ are concept names;
  \item $\disj(A,B)$ where $A,B$ are concept names.
  \end{itemize}
In \cite{DBLP:conf/ijcai/Bonatti18} \O is expressed in the same way. In the following we are showing how to support more general auxiliary knowledge bases expressed in Horn-\SRIQ.

\section{Supporting General Vocabularies with IBQ Reasoning}
\label{sec:IBQ}

Our strategy consists in treating the auxiliary ontologies as
\emph{oracles}.  Roughly speaking, whenever the reasoner for \PL needs
to check a subsumption between two terms defined in the auxiliary
ontologies, the subsumption query is submitted to the oracle.  Of
course this method, called \emph{import by query} (IBQ), is not always
complete
\cite{DBLP:conf/ijcai/GrauMK09,DBLP:journals/jair/GrauM12}. In the
following, we provide sufficient conditions for completeness.

In SPECIAL's policy modeling scenario, the \emph{main} ontology \K
defines policy attributes, such as data categories, purpose etc.\ --
by specifying their ranges and functionality properties -- while the
\emph{auxiliary} ontology \O defines the privacy-related vocabularies
that provide the range for those attributes.
The reasoning task of interest in such scenarios is deciding, for a
given subsumption query $q=(C\sqsubseteq D)$, whether $\K \cup \O
\models q$.  Both $C$ and $D$ are \PL concepts (policies) that may
contain occurrences of concept names defined in \O.

SPECIAL's application scenarios make it possible to adopt a
simplifying assumption that makes oracle reasoning technically simpler
\cite{DBLP:conf/ijcai/GrauMK09,DBLP:journals/jair/GrauM12}, namely, we
assume that neither \K nor the query $q$ share any roles with \O.
This naturally happens in SPECIAL precisely because the roles used in
the main KB identify the sections that constitute a policy (e.g.\ data
categories, purpose, processing, storage, recipients), while the roles
defined in \O model the \emph{contents} of those sections,
e.g.\ anonymization parameters, relationships between recipients,
relationships between storage locations, and the like.

The IBQ framework was introduced to reason with a partly hidden
ontology \O.  For our purposes, IBQ is interesting because instead of
reasoning on $\K\cup \O$ as a whole, each partition can be processed more efficiently
with a different, specialized reasoner.  The reasoner for
$\K$ may query  $\O$ as an oracle, using a query
language \QL consisting of all the subsumptions
\begin{equation}
\label{oq}
A_1 \sqcap \ldots \sqcap A_m \sqsubseteq A_{m+1} \sqcup \ldots
\sqcup A_n 
\end{equation}
such that $A_1, \ldots, A_n$ are concept names.
We will denote with $\pos(\O)$ all the queries to \O that have a positive answer, that is:
\begin{equation*}
\pos(\O) = \{q \in \QL  \mid \O\models q\}\,.
\end{equation*}

\noindent
The problem instances of our interest are formally defined as follows:

\begin{definition}[\PL subsumption instances with oracles, \PLSO]
  \label{def:ibq}
  A \emph{\PL subsumption instance with oracle} is a triple
  \tup{\K,\O,q} where $\K$ is a \PL knowledge base (the \emph{main
    knowledge base}), $\O$ is a Horn-\SRIQ knowledge base (the
  \emph{oracle}), and $q$ is a \PL subsumption query such that
  $(\sig{\K}\cup\sig{q})\cap\sig{\O}\subseteq \NC$.  The set of all
  \PL subsumption instances with oracle will be denoted by \PLSO.
\end{definition}

\noindent
The restriction on the signatures is aimed at
keeping the roles of \O separated from those of \K and $q$, as
previously discussed. 
Oracles are restricted to Horn-\SRIQ because (to the best of our
knowledge) it is the most expressive convex and nominal-free logic
studied so far in the literature; it can be proved that convexity is
essential for the tractability of \PL with oracles, and that nominals
affect the completeness of IBQ reasoning (we do not include these
results here due to space limitations).

The next lemma rephrases the original completeness result for  IBQ
\cite[Lemma~1]{DBLP:conf/ijcai/GrauMK09} in our notation.  Our statement
relaxes the requirements on \O by assuming only that it enjoys the
disjoint model union property (originally it had to be in
\easycal{SRIQ}).  The proof, however, remains essentially the same, and is omitted here.

\begin{lemma}
  \label{lem:CT-completeness-altrui}
  Let \K and \O be knowledge bases and $\alpha$ a GCI, such that
  \begin{enumerate}
  \item \K and $\alpha$ are in $\SROIQ(\easycal{D})$ without the
    universal role,\footnote{The definition of this DL is omitted due
      to space limits; the only important thing is that \PL is a
      fragment of $\SROIQ(\easycal{D})$ without universal role.}
    where \easycal{D} is the concrete domain of integer intervals;
  \item The terminological part of \O enjoys the disjoint model union property;
  \item The terminological part \T of \K is local w.r.t. $\sig{\O}$;
  \item $(\sig{\K} \cup \sig{\alpha}) \cap \sig{\O}\subseteq \NC$.
  \end{enumerate}
  Then $\K\cup\O\models \alpha$ iff $\K\cup\pos(\O)\models \alpha$\,.
\end{lemma}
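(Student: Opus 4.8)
The plan is to prove the two implications separately, the nontrivial one by contraposition. The direction from right to left needs nothing beyond monotonicity: since $\O\models q$ for every $q\in\pos(\O)$, every model of $\K\cup\O$ is already a model of $\K\cup\pos(\O)$, hence satisfies $\alpha$. So all the effort goes into showing that if $\K\cup\pos(\O)\not\models\alpha$ then $\K\cup\O\not\models\alpha$. Fix a model $\I\models\K\cup\pos(\O)$ with $\I\not\models\alpha$; writing $\alpha=(C\sqsubseteq D)$, fix $d\in C^\I\setminus D^\I$. We may assume $\O$ is consistent, since otherwise $\pos(\O)$ contains an unsatisfiable subsumption and $\K\cup\pos(\O)$ is itself inconsistent. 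The goal is to ``complete'' $\I$ with oracle structure so as to obtain a model $\J\models\K\cup\O$ with $\J\not\models\alpha$.

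The first ingredient is a type-realisation argument. For $x\in\Delta^\I$ put $\tau(x)=\{A\in\NC\cap\sig{\O}\mid x\in A^\I\}$. I claim that for every $x$ there is a model $\M_x$ of the terminological part of $\O$ with an element $e_x$ realising $\tau(x)$ \emph{exactly}, i.e.\ $e_x\in A^{\M_x}$ for $A\in\tau(x)$ and $e_x\notin B^{\M_x}$ for every $B\in(\NC\cap\sig{\O})\setminus\tau(x)$. Indeed, let $q$ be the subsumption $A_1\sqcap\cdots\sqcap A_k\sqsubseteq B_1\sqcup\cdots\sqcup B_l$, where $\{A_1,\dots,A_k\}=\tau(x)$ and $\{B_1,\dots,B_l\}=(\NC\cap\sig{\O})\setminus\tau(x)$; this is a query of $\QL$ since $\sig{\O}$ is finite (the degenerate cases, where one side of $q$ would be empty, are covered by reading $\top$ and $\bot$ as admissible in $\QL$). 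In $\I$ the point $x$ satisfies the left-hand side of $q$ but no disjunct of the right-hand side, so, as $\I\models\pos(\O)$, we get $q\notin\pos(\O)$, i.e.\ $\O\not\models q$; any model of $\O$ that falsifies $q$ is a model of the terminological part of $\O$ and realises $\tau(x)$ exactly at the element witnessing the failure of $q$, so it can serve as $\M_x$. This is precisely the point where the \emph{disjunctions} of the oracle query language~(\ref{oq}) are needed; convexity of $\O$ plays no role here. Now let $\M$ be the disjoint union of one fixed model of $\O$ (to carry the individual names and the ABox of $\O$) together with one disjoint copy of each $\M_x$; by the disjoint model union property of the terminological part of $\O$ (assumption~2), and since one summand is a full model of $\O$ while ABox assertions are preserved under such unions, $\M\models\O$.

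Next I glue $\I$ and $\M$. Let $\Delta^\J$ consist of $\Delta^\I$ together with, for each $x$, the copy of $\M_x$ in which $e_x$ is identified with $x$, keeping the fixed full $\O$-model disjoint from everything else. Interpret each symbol of $\sig{\K}\cup\sig{\alpha}$ on $\Delta^\I$ exactly as in $\I$; interpret each concept and role name of $\sig{\K}\cup\sig{\alpha}$ that does not belong to $\sig{\O}$ as empty on all newly added elements; and interpret each symbol of $\sig{\O}$ throughout as in $\M$. This is well defined precisely because $x$ and $e_x$ carry the same $\sig{\O}$-type, so the prescriptions agree on the identified points; and by construction the $\sig{\O}$-reduct of $\J$ is isomorphic to $\M$, so $\J\models\O$. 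Moreover $\J\not\models\alpha$: by assumption~4 the roles occurring in $\alpha$ are disjoint from $\sig{\O}$, so $\J$ interprets them exactly as $\I$ does and they never leave $\Delta^\I$, and since the universal role is excluded (assumption~1) a straightforward induction on the subconcepts of $C$ and $D$ gives $C^\J\cap\Delta^\I=C^\I$ and $D^\J\cap\Delta^\I=D^\I$; hence $d\in C^\J\setminus D^\J$.

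The main obstacle, and the technical heart of the argument, is verifying $\J\models\K$. On $\Delta^\I$ the interpretation $\J$ agrees with $\I$ on the whole of $\sig{\K}$, so no axiom of $\K$ can fail ``within'' the old domain; the only danger is that an axiom mentioning a shared (hence $\sig{\O}$) concept name is violated because of the newly added elements. This is exactly what locality of the terminological part of $\K$ with respect to $\sig{\O}$ (assumption~3) rules out: locality means $\K$ imposes no constraint at all on the $\sig{\O}$-symbols, so its axioms cannot ``see'' the fresh oracle structure through their $\sig{\O}$-part, while their $\sig{\K}\setminus\sig{\O}$-part is empty on every new element. Turning this into a proof amounts to re-deriving, from locality, that amalgamating a model of the terminological part of $\K$ with a model of the $\sig{\O}$-fragment along a common $\sig{\O}$-type is again a model of that terminological part; this goes exactly as in the proof of Lemma~1 of \cite{DBLP:conf/ijcai/GrauMK09}, the only change being that assumption~2 is used where that proof invokes the hypothesis that $\O$ is an \SRIQ knowledge base --- a hypothesis it exploits only through the disjoint model union property. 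Together with the previous two verifications, this shows that $\J$ witnesses $\K\cup\O\not\models\alpha$, completing the contrapositive.
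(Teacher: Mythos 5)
Your proof is correct and follows essentially the route the paper intends: the paper omits the proof of this lemma and defers to Lemma~1 of \cite{DBLP:conf/ijcai/GrauMK09}, and your reconstruction --- exact realisation of each element's shared concept-name type via the failure of the corresponding \QL-query, a disjoint union of oracle models justified by hypothesis~2 in place of the original \SRIQ assumption, fusion along the shared types, and locality of \K's TBox to handle the fresh elements --- is precisely that argument with the announced relaxation. The only point worth flagging is one you already address: the reduction to consistent \O requires $\top$ and $\bot$ to be admissible as degenerate sides of \QL-queries, which is indeed the paper's intended reading (cf.\ rule~7 of Table~\ref{norm-rules-O}).
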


\noindent
Using the above lemma, we prove a variant of IBQ completeness for
\PLSO. The locality requirement of
Lemma~\ref{lem:CT-completeness-altrui} is removed by shifting axioms
from \K to \O.

\begin{theorem}
  \label{thm:compl-w-shifting}
  For all problem instances $\pi = \tup{\K,\O,q}\in\PLSO$, let
  \begin{eqnarray*}
    \K^- &=& \{ \alpha \in\K \mid  \alpha = \range(R,A) \mbox{ or } \alpha = \func(R) \,\}\\
    \O^+_\K &=& \O\cup(\K \setminus \K^-) \,.
  \end{eqnarray*}
  Then
  $
  \K\cup\O\models q \mbox{\: iff \:} \K^- \cup \pos(\O^+_\K) \models q \,.
  $
\end{theorem}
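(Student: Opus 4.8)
The plan is to reduce Theorem~\ref{thm:compl-w-shifting} to Lemma~\ref{lem:CT-completeness-altrui} by verifying that, after moving the non-range/non-functionality axioms of $\K$ into the oracle, all four hypotheses of the lemma hold for the pair $(\K^-,\O^+_\K)$ and the query $q$. First I would observe that $\K\cup\O$ and $\K^-\cup\O^+_\K$ are literally the same set of axioms (we only re-partitioned them), so trivially $\K\cup\O\models q$ iff $\K^-\cup\O^+_\K\models q$; the content of the theorem is therefore that Lemma~\ref{lem:CT-completeness-altrui} applies to the re-partitioned instance, giving $\K^-\cup\O^+_\K\models q$ iff $\K^- \cup\pos(\O^+_\K)\models q$.

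Next I would check the four conditions of the lemma with $\K := \K^-$, $\O := \O^+_\K$, $\alpha := q$. Condition~1 (membership in $\SROIQ(\easycal{D})$ without the universal role): $\K^-$ consists only of $\range$ and $\func$ axioms, which are $\SROIQ(\easycal{D})$ axioms not using the universal role, and $q$ is a \PL subsumption query, hence lies in the required fragment by the footnote. Condition~4 (signature separation): since $\sig{\K^-}\subseteq\sig{\K}$ and $\sig{q}$ is unchanged, and $\sig{\O^+_\K}=\sig{\O}\cup\sig{\K\setminus\K^-}\subseteq\sig{\O}\cup\sig{\K}$, the condition $(\sig{\K^-}\cup\sig{q})\cap\sig{\O^+_\K}\subseteq\NC$ follows from the \PLSO\ hypothesis $(\sig{\K}\cup\sig{q})\cap\sig{\O}\subseteq\NC$ together with the fact that $\K\setminus\K^-$ (axioms $A\sqsubseteq B$ and $\disj(A,B)$ with $A,B$ concept names) contributes only concept names to the intersection. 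Condition~2 (disjoint model union property for the terminological part of $\O^+_\K$): the base logic of Table~\ref{tab:syntax-semantics} enjoys the disjoint model union property for every knowledge base built over it, as noted in the preliminaries; $\O^+_\K$ is such a knowledge base (Horn-\SRIQ axioms plus concept inclusions and disjointness axioms), so this holds.

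The main obstacle is Condition~3: the terminological part $\T^-$ of $\K^-$ must be local with respect to $\sig{\O^+_\K}$. This is precisely why the $\range$ and $\func$ axioms are the ones kept in $\K^-$ while $A\sqsubseteq B$ and $\disj(A,B)$ are shifted into the oracle: I would argue that given any interpretation $\I$ over $\sig{\O^+_\K}$, setting $X^\I=\emptyset$ for every concept and role name $X\in\sig{\K^-}\setminus\sig{\O^+_\K}$ yields a model of $\T^-$. Concretely, an axiom $\range(R,A)$, i.e.\ $\top\sqsubseteq\forall R.A$, and $\func(R)$, i.e.\ $\top\sqsubseteq(\le 1\,R.\top)$, are both vacuously satisfied once $R^\I=\emptyset$; and by the signature separation (Condition~4) every role name occurring in $\K^-$ lies in $\sig{\K}\setminus\sig{\O}$, hence in $\sig{\K^-}\setminus\sig{\O^+_\K}$ up to the concept names shifted in — so every such $R$ is indeed emptied. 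The only subtlety is the concept name $A$ in $\range(R,A)$: it may belong to $\sig{\O^+_\K}$ (it is a range restriction into the vocabulary), so it is not emptied, but that is harmless because the axiom is already satisfied by $R^\I=\emptyset$ regardless of $A^\I$. I would spell out that after this emptying every axiom of $\T^-$ holds, establishing locality, and then invoke Lemma~\ref{lem:CT-completeness-altrui} to conclude.

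I would close by noting the one bookkeeping point that makes the rewrite legitimate: $\pos(\O^+_\K)$ is computed with respect to the enlarged oracle $\O^+_\K$, not $\O$, so the shifted inclusions and disjointness axioms do contribute to the set of positive query answers — which is exactly what is needed, since those axioms were removed from the main knowledge base and their deductive strength must be recovered through the oracle interface. No further argument is required; the theorem is the composition of the trivial re-partitioning identity with one application of Lemma~\ref{lem:CT-completeness-altrui}.
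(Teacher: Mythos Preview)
Your proposal is correct and follows essentially the same route as the paper: observe that $\K\cup\O=\K^-\cup\O^+_\K$ is just a re-partition, then verify the four hypotheses of Lemma~\ref{lem:CT-completeness-altrui} for the shifted pair, with locality of $\K^-$ established by noting that every role in $\K^-$ lies outside $\sig{\O^+_\K}$ (since the shifted axioms contain only concept names) and hence is emptied. Your treatment is in fact slightly more explicit than the paper's on the locality argument, in particular the remark that the range concept $A$ may belong to $\sig{\O^+_\K}$ but this is harmless once $R^\I=\emptyset$.
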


\begin{proof}
  Since $\K\cup\O = \K^- \cup \O_\K^+$, it suffices to show that
  \begin{equation*}
    \label{shifting:1}
    \K^-\cup\O_\K^+\models q \mbox{\: iff \:} \K^- \cup
    \pos(\O^+_\K) \models q\,.
  \end{equation*}
  This equivalence can be proved with
  Lemma~\ref{lem:CT-completeness-altrui}; it suffices to show that
  $\K^-$, $\O_\K^+$ and $q$ satisfy the hypotheses of the lemma.
  First note that $\K^-$ is a \PL knowledge base and $\O_\K^+$ is a
  Horn-\SRIQ knowledge base (because, by definition of \PLSO, \K is in
  \PL and \O in Horn-\SRIQ, and the axioms shifted from \L to
  $\O_\K^+$ can be expressed in Horn-\SRIQ, too).  Both \PL and
  Horn-\SRIQ are fragments of $\cal SROIQ(D)$ without $U$, therefore
  hypothesis~1 is satisfied by $\K^-$ and $\O_\K^+$.  Moreover,
  both \PL and Horn-\SRIQ enjoy
  the disjoint model union property, therefore hypothesis~2 is
  satisfied. Next recall that
  $(\sig{\K}\cup\sig{q})\cap\sig{\O}\subseteq \NC$ holds, by definition of
  \PLSO. Since the axioms $\alpha\in\K\setminus\K^-$ (transferred
  from \K to $\O_\K^+$) contain no roles (they are of the form
  $A\sqsubseteq B$ or $\disj(A,B)$), it follows that
  \begin{equation*}
    (\sig{\K^-}\cup\sig{q})\cap\sig{\O_\K^+}\subseteq \NC \,,
  \end{equation*}
  that is, hypothesis~4 holds.  A second consequence of this inclusion
  is that $\K^-$ contains only axioms of the form $\range(R,A)$ and
  $\func(A)$ such that $R\not\in\sig{\O_\K^+}$.  They are trivially
  satisfied by any interpretation \I such that
  $R^\I=\emptyset$. Therefore $\K^-$ is local
  w.r.t.\ $\sig{\O_\K^+}$ and hypothesis~3 is satisfied.
  \qed
\end{proof}

\noindent
\emph{In the following, let $\K^-$ and $\O_\K^+$ be defined as in
  Theorem~\ref{thm:compl-w-shifting}.}

The reasoner for solving \PLSO consists in two normalization phases followed
by a structural subsumption check.  The first normalization phase
splits the intervals in the left-hand side of the given query
$q$ to make it \emph{interval safe}:

\begin{definition}[Interval safety]
   An inclusion $C\sqsubseteq D$ is \emph{interval safe} iff, for all
   constraints $\exists f.[\ell,u]$ occurring in $C$ and all $\exists
   f'.[\ell',u']$ occurring in $D$, either
   $[\ell,u]\subseteq[\ell',u']$, or $[\ell,u]\cap[\ell',u'] =
   \emptyset$.
\end{definition}

\noindent
Assuming that the number of intervals in each simple \PL concept in
$C$ is bounded by a constant $c$, this phase can be computed in
polynomial time \cite{DBLP:conf/ijcai/Bonatti18}.  The result is denoted by \splt{C}{D}.

The second normalization exhaustively applies the rewrite rules in Table~\ref{norm-rules-O} to the left-hand side of
$C$.  It is easy to see that these rules preserve concept equivalence.  We say that a \PL
concept $C$ is \emph{normalized w.r.t.\ \K and \O} if none of the
rules in Table~\ref{norm-rules-O} is applicable.  
Differently from \cite{DBLP:conf/ijcai/Bonatti18}, rule~7 queries the
oracle to detect inconsistencies.
\begin{table}[h]
	\caption{Normalization rules for \SSO. Conjunctions are
          treated as sets (i.e.\ the ordering of conjuncts is
          irrelevant, and duplicates are removed). }
	\label{norm-rules-O}
	\centering
	\small
	\framebox{
                \renewcommand{\arraystretch}{1.3}
		\begin{minipage}{.96\textwidth}			
			\begin{tabular}{rlp{15em}}
				1) & $\bot \sqcap D \leadsto \bot$
				\\
				2) & $\exists R.\bot \leadsto \bot$
				\\
				3) & $\exists f.[l,u] \leadsto \bot$ & \hspace*{68pt} if $l>u$
				\\
				4) & $(\exists R.D)\sqcap (\exists R.D') \sqcap D'' \leadsto \exists R.( D\sqcap D')\sqcap D''$ 
				& \hspace*{68pt} if $\func(R) \in \K^-$
				\\
				5) & \multicolumn{2}{l}{$\exists f.[l_1,u_1] \sqcap \exists f.[l_2,u_2] \sqcap D \leadsto \exists f.[\max(l_1,l_2),\min(u_1,u_2)]\sqcap D $  \quad  if $\func(f) \in \K^-$}
				\\
				6) & \multicolumn{2}{l}{$\exists R.D\sqcap D' \leadsto \exists R.( D\sqcap A)\sqcap D'$ \hspace*{30pt} if $\range(R,A)  \in \K^-$ and neither $A$ nor $\bot$}  \vspace*{-5pt} \\
                                & \multicolumn{2}{l}{\hspace*{160pt}are conjuncts of $D$}
				\\
				7) & $A_1 \sqcap\ldots\sqcap A_n \sqcap D \leadsto \bot$
				& if $\O_\K^+\models A_1 \sqcap\ldots\sqcap A_n \sqsubseteq \bot$
			\end{tabular}
		\end{minipage}
	}
\end{table}

The third phase of the reasoning is described in
Algorithm~\ref{alg:sso}, that differs from its counterpart for \PL
\cite{DBLP:conf/ijcai/Bonatti18} in line~3, where subsumptions are
checked by invoking the oracle.
Algorithm~\ref{alg:sso} accepts \emph{elementary}
(i.e.\ normalized) concepts:

\begin{definition}
  A \PL subsumption $C\sqsubseteq D$ is \emph{elementary w.r.t.\ \K
    and \O} if both $C$ and $D$ are simple, $C\sqsubseteq D$ is
  interval safe, and $C$ is normalized w.r.t.\ \K and \O.
\end{definition}

\begin{algorithm}[h]
  \caption{$\SSO(C\sqsubseteq D)$}
  \label{alg:sso}
  \small
  \KwIn{An ontology \O and a \PL subsumption $C\sqsubseteq D$ \hide{that is  elementary w.r.t.\ \K and \O}}
  \KwOut{ \easytt{true}\, if $\O \models C\sqsubseteq D$, \quad\easytt{false}\,
    otherwise, \quad under suitable restrictions }
  \vspace*{\medskipamount}
  \Begin{
      \lIf{$C=\bot$}{ \Return{\easytt{true}}  }
      
      \lIf{$D=A$ and $(A_1\sqcap\ldots\sqcap A_n \sqsubseteq A) \in \pos(\O)$, 
        where $A_1,\ldots,A_n$ are the top-level concept names in $C$}{
        \Return{\easytt{true}} }
      
      \lIf{$D=\exists f.[l,u]$  and $C=\exists f.[l',u']\sqcap C'$ and $l\leq l'$ and $u'\leq u$}
          { \Return{\easytt{true}}  }
          
      \lIf{$D=\exists R.D'$, $C=(\exists R.C')\sqcap C''$ and $\SSO(C'\sqsubseteq D')$}
          { \Return{\easytt{true}}  }
              
      \lIf{$D=D'\sqcap D''$, $\SSO(C\sqsubseteq D')$, and $\SSO(C\sqsubseteq D'')$}
                  { \Return{\easytt{true}} }
                  
      \lElse{ \Return{\easytt{false}} }
  }
\end{algorithm}

\noindent
Finally, Algorithm~\ref{alg:main-O} (\PLRO) specifies the complete
reasoning process for general \PL subsumptions with oracles.
 \PLRO can be proved to be sound and complete by analogy with
 \cite{DBLP:conf/ijcai/Bonatti18}.  First, it is possible to define a
 canonical model $(\I,d)$ with the following property:\footnote{The
   definition is similar to \cite[Def.~2]{DBLP:conf/ijcai/Bonatti18};
   each occurrence of $A_i\sqsubseteq^*A$ is replaced by
   $\big(\bigsqcap_{i=1}^nA_i \sqsubseteq A\big) \in \pos(\O_\K^+)$.}

\begin{algorithm}[h]
  \caption{$\PLRO(\K,C\sqsubseteq D)$}
  \label{alg:main-O}
  \small
  \KwIn{\K and $C\sqsubseteq D$ such that $\pi=\tup{\K,\O,C\sqsubseteq D}\in\PLSO$}
  \KwOut{ \easytt{true}\, if $\K \cup \O \models C\sqsubseteq D$, \quad\easytt{false}\, otherwise}
  \vspace*{\medskipamount}
  \Begin{
      {\bf construct}  $\K^-$ and $\O_\K^+$ as defined in Theorem~\ref{thm:compl-w-shifting} \;

      {\bf let} $C'$ be the normalization of $C$ w.r.t.\ \K and \O (with the
      rules in Table~\ref{norm-rules-O}) \;
      
      {\bf let} $C'' = \splt{C'}{D}$  \; 

      \tcp{assume that $C''=C_1\sqcup\ldots\sqcup C_m$ and $D=D_1\sqcup\ldots\sqcup D_n$}

      \tcp{check whether each $C_i$ is subsumed by some $D_j$}

      \For{$i=1,\dots,m$}{
        \For{$j=1,\dots,n$}{
          \lIf  {$\SSA^{\O_\K^+}(C_i\sqsubseteq D_j)=\easytt{true}$}
                {skip to next $i$ in outer loop}
        }
        \Return{\easytt{false}}
      }
      \Return{\easytt{true}}
  }
\end{algorithm}

\begin{lemma}
  \label{lem:mod-of-norm-C}
  If $C$ is a simple \PL concept normalized w.r.t.\ \K and \O, and
  $C\neq\bot$, then each canonical model $(\I,d)$ of $C$ enjoys the
  following properties:
  \begin{enumerate}
    \renewcommand{\theenumi}{\alph{enumi}}
  \item $\I \models \K^-\cup \pos(\O_\K^+)$;
  \item $(\I,d) \models C$.
  \end{enumerate}
\end{lemma}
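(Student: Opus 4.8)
The plan is to exhibit the canonical model explicitly and verify the two claims separately: (b) by structural induction on $C$, and (a) by a direct inspection of the construction, the whole argument resting on the normalization rules of Table~\ref{norm-rules-O} and on the convexity of $\O_\K^+$. First I recall the shape of the canonical model $(\I,d)$ of a simple \PL concept $C$ (following \cite[Def.~2]{DBLP:conf/ijcai/Bonatti18} with the footnoted modification): it is the finite tree whose root $d$ is labelled with the top-level concept names of $C$, closed under the simple subsumptions $\bigsqcap_i A_i\sqsubseteq A\in\pos(\O_\K^+)$; for every top-level conjunct $\exists R.C'$ of $C$ there is exactly one $R$-edge from $d$ to the root of a recursively built canonical model of $C'$; for every top-level conjunct $\exists f.[\ell,u]$ of $C$ one adds the pair $(d,\ell)$ to $f^\I$; and every concept name, role name and concrete property not forced by these clauses is left empty. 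Since \PL has no inverse roles, satisfaction of a simple \PL concept at a node depends only on that node and its descendants, so gluing a sub-canonical-model at a successor does not affect which \PL concepts that successor satisfies.

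For (b) I show $d$ satisfies every top-level conjunct of $C$, by induction on the structure of $C$. Because $C$ is normalized and $C\ne\bot$, no top-level conjunct is $\bot$ (rule 1), none is $\exists R.\bot$ (rule 2), and no conjunct $\exists f.[\ell,u]$ has $\ell>u$ (rule 3). Hence a concept-name conjunct holds at $d$ by construction; a conjunct $\exists f.[\ell,u]$ holds because $(d,\ell)\in f^\I$ and $\ell\in[\ell,u]$; and a conjunct $\exists R.C'$ holds because $d$ has an $R$-successor $e$ that is the root of a canonical model of $C'$ --- here one first notes that $C'$ is itself normalized w.r.t.\ $\K$ and $\O$ and $C'\ne\bot$ (otherwise a rule of Table~\ref{norm-rules-O} applicable inside $C'$ would be applicable inside $C$), so the induction hypothesis yields $e\in C'^{\,\I}$ via the locality remark above. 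Taking conjunctions gives $d\in C^\I$.

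For (a) I treat $\pos(\O_\K^+)$ and $\K^-$ in turn. Given $q=(A_1\sqcap\dots\sqcap A_m\sqsubseteq A_{m+1}\sqcup\dots\sqcup A_n)\in\pos(\O_\K^+)$ and a node $x$ with $x\in A_i^\I$ for all $i\le m$, convexity of Horn-\SRIQ forces $(\bigsqcap_{i\le m}A_i\sqsubseteq A_j)\in\pos(\O_\K^+)$ for some $j\in\{m+1,\dots,n\}$; since every node's label is closed precisely under such simple subsumptions, $x\in A_j^\I$, so $\I\models q$ (rule 7 together with $C\ne\bot$ ensures the top-level names at every node are jointly $\O_\K^+$-consistent, so this closure stays well-behaved). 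For $\K^-$, recall it contains only axioms $\func(R)$ and $\range(R,A)$ with $R\notin\sig{\O_\K^+}$ by the \PLSO signature condition, so every $R$-edge and every $f$-value in $\I$ is one introduced by the tree construction. If $\func(R)\in\K^-$ then, $C$ being normalized, rule 4 (role names) or rule 5 (concrete properties) leaves at most one top-level existential conjunct with role $R$ at each node, hence at most one $R$-successor (resp.\ $f$-value); so $\I\models\func(R)$. If $\range(R,A)\in\K^-$ then rule 6 forces, for every occurrence of $\exists R.D$ in $C$, that $A$ be a conjunct of $D$ --- the other option permitted by rule 6, namely $\bot$ being a conjunct of $D$, is excluded since it would trigger rules 1--2 against normalization --- so every $R$-successor carries the label $A$ and $\I\models\range(R,A)$.

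The only delicate point is the $\pos(\O_\K^+)$ part of (a): one must be certain that the purely local label-closure at the tree nodes, which uses only the non-disjunctive consequences of $\O_\K^+$, already satisfies all of $\pos(\O_\K^+)$, including the properly disjunctive queries. This is exactly where convexity of Horn-\SRIQ enters, and it is the reason for restricting the oracle to a convex fragment. The rest --- claim (b) and the $\K^-$ part of (a) --- is bookkeeping: rules 4, 5 and 6 were designed precisely to internalize $\func$ and $\range$ into the syntactic form of the normalized concept, and the \PLSO signature restriction keeps $\O_\K^+$ from contributing any edge over a role constrained by $\K^-$.
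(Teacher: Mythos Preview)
Your proof is correct and follows essentially the approach the paper intends: the paper does not spell out a proof of this lemma but defers to \cite{DBLP:conf/ijcai/Bonatti18}, indicating via the footnote that the canonical model is the tree construction with node labels closed under the non-disjunctive consequences $\bigsqcap_i A_i\sqsubseteq A\in\pos(\O_\K^+)$, and your argument unpacks exactly this. The structural induction for (b), the use of rules~4--6 to discharge the $\func$ and $\range$ axioms of $\K^-$, and the appeal to convexity of Horn-\SRIQ to reduce disjunctive oracle queries to non-disjunctive ones are all the expected ingredients; the side remark that $R\notin\sig{\O_\K^+}$ is not strictly needed for (a) (the tree edges come solely from the existential conjuncts of $C$ regardless), but it is harmless.
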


\noindent
Each canonical model of $C$
characterizes \emph{all} the valid elementary subsumptions whose
left-hand side is $C$:

\begin{lemma}
  \label{lem:counterex}
  If $C\sqsubseteq D$ is elementary w.r.t.\ \K and \O, $C\neq\bot$, and
  $(\I,d)$ is a canonical model of $C$, then $\K^-\cup \pos(\O_\K^+) \models
  C\sqsubseteq D \mbox{\: iff \:} (\I,d)\models D \,.$
\end{lemma}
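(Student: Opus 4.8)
\noindent The plan is to prove the two implications of the biconditional separately. The implication ``$\K^-\cup\pos(\O_\K^+)\models C\sqsubseteq D$ $\Rightarrow$ $(\I,d)\models D$'' is immediate: applying Lemma~\ref{lem:mod-of-norm-C} to the simple, normalized, non-$\bot$ concept $C$, the canonical model $(\I,d)$ is itself a model of $\K^-\cup\pos(\O_\K^+)$ with $d\in C^\I$, so the assumed subsumption gives $d\in D^\I$, i.e.\ $(\I,d)\models D$. (In passing this also shows $\K^-\cup\pos(\O_\K^+)\not\models C\sqsubseteq\bot$, so the case $D=\bot$ of the induction below is vacuous on both sides, since $(\I,d)\not\models\bot$ as well.)

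For the converse I would argue by induction on the structure of the simple \PL concept $D$, proving the stronger statement that $(\I,d)\models D$ implies $e\in D^\J$ for \emph{every} model $\J$ of $\K^-\cup\pos(\O_\K^+)$ and \emph{every} $e\in C^\J$ --- which is precisely $\K^-\cup\pos(\O_\K^+)\models C\sqsubseteq D$. The fact used in every case is that $e\in C^\J$ forces $e$ to reproduce the top-level structure of $C$: it belongs to $B^\J$ for every top-level concept name $B$ of $C$, hence (as $\J\models\pos(\O_\K^+)$) to $A^\J$ for every $A$ with $(B_1\sqcap\ldots\sqcap B_n\sqsubseteq A)\in\pos(\O_\K^+)$, where $B_1,\ldots,B_n$ are those top-level names and the set of such $A$ is exactly the root label of the canonical model; it has an $R$-successor in $(C')^\J$ for every top-level conjunct $\exists R.C'$ of $C$; and it has an $f$-value in $[l_0,u_0]$ for every top-level conjunct $\exists f.[l_0,u_0]$ of $C$. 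The cases $D=A$, $D=D_1\sqcap D_2$ and $D=\exists R.D'$ then follow easily: for a concept name, $(\I,d)\models A$ means $A$ lies in the root label, so $e\in A^\J$; for a conjunction, apply the induction hypothesis to $D_1$ and $D_2$; for $D=\exists R.D'$, the relevant canonical $R$-successor of $d$ is the canonical model of some $C'$ with $\exists R.C'$ a top-level conjunct of $C$ and satisfying $D'$, the subsumption $C'\sqsubseteq D'$ is again elementary (normalization is hereditary to subconcepts --- the rules are applied at every position --- and rules~1 and~2 of Table~\ref{norm-rules-O} then give $C'\neq\bot$, while interval safety is hereditary by definition), so the induction hypothesis yields $\K^-\cup\pos(\O_\K^+)\models C'\sqsubseteq D'$, which transferred along the $R$-successor of $e$ gives $e\in(\exists R.D')^\J$.

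The case $D=\exists f.[l,u]$ is the one I expect to be the real obstacle, and it is where interval safety is indispensable. A naive ``the canonical tree embeds homomorphically into every model of $C$'' argument cannot transport $f$-values, since the second component of an $f$-edge is a fixed natural number common to all interpretations: the canonical $f$-witness of $d$ is rigid, whereas a model $\J$ may send $e$ to any $f$-value lying in \emph{every} interval that $C$ imposes on $f$. Interval safety removes this danger --- it guarantees that each $\exists f.[l,u]$ occurring in $D$ is a superinterval of, or disjoint from, every $\exists f.[l_0,u_0]$ occurring in $C$ --- and, since $C$ is normalized (rules~3 and~5 have deleted empty intervals and merged co-located ones), one obtains that $(\I,d)\models\exists f.[l,u]$ holds iff some top-level $\exists f.[l_0,u_0]$ of $C$ has $[l_0,u_0]\subseteq[l,u]$; this containment transfers to every model of $C$ through the observation above, giving $e\in(\exists f.[l,u])^\J$. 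The remaining points are routine once the canonical-model construction is unfolded: because $C$ is normalized, rule~7 has not fired, so the root label is $\pos(\O_\K^+)$-consistent and $(\I,d)$ really is a pointed model at $d$; and the canonical model carries no labels or edges beyond those dictated by $C$, so the case analysis is exhaustive.
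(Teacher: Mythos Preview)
Your argument is correct and is exactly the natural route: the forward implication is immediate from Lemma~\ref{lem:mod-of-norm-C}, and the converse goes by structural induction on $D$, exploiting that the canonical model is a \emph{minimal} tree-shaped witness of $C$ (only the concept names forced by $\pos(\O_\K^+)$ at each node, only the $R$- and $f$-successors dictated by the top-level conjuncts of the corresponding subconcept). Your handling of the interval case is right, and it is indeed the place where interval safety is essential; your observation that elementarity (normalization, $\neq\bot$, interval safety) is hereditary to the pair $(C',D')$ in the $\exists R.D'$ step is the other point that needs saying, and you say it.

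As for comparison: the paper does not actually give a proof of this lemma. It states the four lemmas (\ref{lem:mod-of-norm-C}--\ref{lem:SSA-evaluates-D}) and remarks that the whole development ``can be proved to be sound and complete by analogy with~\cite{DBLP:conf/ijcai/Bonatti18}'', with a footnote indicating that the only change to the canonical-model construction of that paper is replacing the transitive closure $A_i\sqsubseteq^* A$ by membership of $\bigsqcap_i A_i \sqsubseteq A$ in $\pos(\O_\K^+)$. Your proof is precisely the argument that this analogy points to, so there is nothing further to contrast.
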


\noindent
Moreover, by means of canonical models, one can prove that interval safety makes
the non-convex logic \PL behave like a convex logic.

\begin{lemma}
  \label{lem:convexity-PL}
  For all interval-safe \PL subsumption queries $\sigma =
  \big(C_1\sqcup\ldots\sqcup C_m \sqsubseteq D_1 \sqcup\ldots\sqcup
  D_n \big)$ such that each $C_i$ is normalized w.r.t.\ \K and \O, the
  entailment $\K^- \cup \pos(\O_\K^+) \models \sigma$ holds iff for
  all $i\in[1,m]$ there exists $j\in[1,n]$ such that $\K^- \cup \pos(\O_\K^+) \models
  C_i\sqsubseteq D_j$.
\end{lemma}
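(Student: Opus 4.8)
The plan is to reduce the statement to a convexity claim about a single normalized simple concept on the left, and then settle that claim with the canonical model machinery already in place. First I would note that, since $(C_1\sqcup\ldots\sqcup C_m)^\M = C_1^\M\cup\ldots\cup C_m^\M$ in every interpretation $\M$, the entailment $\K^-\cup\pos(\O_\K^+)\models\sigma$ is equivalent to: for every $i\in[1,m]$, $\K^-\cup\pos(\O_\K^+)\models C_i\sqsubseteq D_1\sqcup\ldots\sqcup D_n$. Hence it suffices to prove, for each fixed $i$, that this entailment holds iff $\K^-\cup\pos(\O_\K^+)\models C_i\sqsubseteq D_j$ for some $j\in[1,n]$. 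The right-to-left direction is immediate because $D_j^\M\subseteq(D_1\sqcup\ldots\sqcup D_n)^\M$ in every $\M$; and if $C_i=\bot$ the left-to-right direction is trivial as well (take $j=1$, which exists since $n\geq 1$). So the real content is the left-to-right direction when $C_i\neq\bot$.

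For that case I would take a canonical model $(\I,d)$ of $C_i$ — it exists because $C_i$ is a simple \PL concept, normalized w.r.t.\ \K and \O, and distinct from $\bot$. By Lemma~\ref{lem:mod-of-norm-C}, $\I\models\K^-\cup\pos(\O_\K^+)$ and $(\I,d)\models C_i$, so from the assumed entailment $\K^-\cup\pos(\O_\K^+)\models C_i\sqsubseteq D_1\sqcup\ldots\sqcup D_n$ we obtain $d\in D_1^\I\cup\ldots\cup D_n^\I$, hence $(\I,d)\models D_j$ for some $j$. The crucial observation is then that $C_i\sqsubseteq D_j$ is \emph{elementary} w.r.t.\ \K and \O: both sides are simple, $C_i$ is normalized, and $C_i\sqsubseteq D_j$ is interval safe because every constraint $\exists f.[\ell,u]$ occurring in $C_i$ occurs in the left-hand side of $\sigma$ and every $\exists f'.[\ell',u']$ occurring in $D_j$ occurs in the right-hand side of $\sigma$, so the subset-or-disjointness condition guaranteed by interval safety of $\sigma$ applies verbatim to that very pair. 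With $C_i\sqsubseteq D_j$ elementary and $(\I,d)\models D_j$, Lemma~\ref{lem:counterex} yields $\K^-\cup\pos(\O_\K^+)\models C_i\sqsubseteq D_j$, which is exactly what is needed.

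The substantive work is entirely carried by the already-established canonical model lemmas (Lemmas~\ref{lem:mod-of-norm-C} and~\ref{lem:counterex}); the step that needs care — and the one on which the whole argument hinges — is the bookkeeping showing that interval safety of the full query $\sigma$ transfers to each single pair $C_i\sqsubseteq D_j$, so that Lemma~\ref{lem:counterex} becomes applicable. Interval safety is genuinely essential here: \PL is not convex in general — for instance $\exists f.[1,2]\sqsubseteq\exists f.[1,1]\sqcup\exists f.[2,2]$ is valid while neither disjunct is entailed — and forcing the relevant intervals to be nested or disjoint is precisely what excludes such counterexamples.
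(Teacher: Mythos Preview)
Your proposal is correct and follows exactly the approach the paper indicates: the paper does not spell out a proof of this lemma, but states that it is proved ``by means of canonical models,'' which is precisely what you do via Lemmas~\ref{lem:mod-of-norm-C} and~\ref{lem:counterex}. The reduction to a single $C_i$, the verification that interval safety of $\sigma$ is inherited by each $C_i\sqsubseteq D_j$ so that the subsumption is elementary, and the appeal to the canonical model to pick out the right disjunct are all sound and constitute the intended argument.
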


\noindent
Now that the semantic properties are laid out, we focus on the
algorithms.  Roughly speaking, the next lemma says that
$\SSA^{\O_\K^+}$ (Alg.~\ref{alg:sso}) decides whether the canonical model $(\I,d)$ of $C$
satisfies $D$. The lemma can be proved by structural induction on $D$.

\begin{lemma}
  \label{lem:SSA-evaluates-D}
  If $C\sqsubseteq D$ is elementary w.r.t.\ \K and \O, $C\neq\bot$, and
  $(\I,d)$ is the canonical model of $C$, then $\SSA^{\O_\K^+}(C
  \sqsubseteq D)=\mathtt{true}\mbox{\: iff \:} (\I,d)\models D \,.$
\end{lemma}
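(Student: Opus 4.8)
The plan is to proceed by structural induction on $D$, leaning on the description of the canonical model $(\I,d)$ of $C$ (the variant of \cite[Def.~2]{DBLP:conf/ijcai/Bonatti18} recalled in the footnote, in which $d\in A^\I$ precisely when $\big(\bigsqcap_{i=1}^{n}A_i\sqsubseteq A\big)\in\pos(\O_\K^+)$, where $A_1,\dots,A_n$ are the top-level concept names of $C$). First I would record the standing facts used in every case: by Lemma~\ref{lem:mod-of-norm-C} we have $(\I,d)\models C$; since $C\sqsubseteq D$ is elementary, $C$ and $D$ are simple, $C$ is normalized w.r.t.\ \K and \O, and $C\sqsubseteq D$ is interval safe; and since $C\neq\bot$, rules~1--3 of Table~\ref{norm-rules-O} guarantee that $C$ has no $\bot$ conjunct, no conjunct $\exists R.\bot$, and only intervals $[\ell,u]$ with $\ell\le u$. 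I would also isolate, once and for all, the two structural facts about the canonical model on which the induction rests: (i) the $f$- and $R$-successors of $d$ are exactly those produced by the corresponding top-level conjuncts of $C$ (after the merging and range-filling of rules~4--6); and (ii) the sub-model hanging below an $R$-successor of $d$ that stems from a conjunct $\exists R.C'$ of $C$ is itself a canonical model of $C'$.

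Then I would go through the cases in the grammar of simple \PL concepts. For $D=\bot$ both sides are false, since no rule of Algorithm~\ref{alg:sso} matches a bottom right-hand side and $\bot^\I=\emptyset$. For $D=A$ with $A\in\NC$, the only rule of Algorithm~\ref{alg:sso} that can return $\mathtt{true}$ is the one that queries the oracle, and its guard is literally $\big(\bigsqcap_{i=1}^{n}A_i\sqsubseteq A\big)\in\pos(\O_\K^+)$, which by the definition of the canonical model is equivalent to $d\in A^\I$; so this case reduces to unfolding definitions. For $D=\exists f.[\ell,u]$: in one direction, a successful run means $C$ has a conjunct $\exists f.[\ell',u']$ with $[\ell',u']\subseteq[\ell,u]$, and (using $\ell'\le u'$) fact~(i) supplies an $f$-successor of $d$ inside $[\ell,u]$; conversely, any $f$-successor $i\in[\ell,u]$ of $d$ must, by fact~(i), come from some conjunct $\exists f.[\ell'',u'']$ of $C$ with $i\in[\ell'',u'']$, and interval safety of $C\sqsubseteq D$ upgrades the non-empty overlap $[\ell'',u'']\cap[\ell,u]$ to the inclusion $[\ell'',u'']\subseteq[\ell,u]$ that the guard requires. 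For $D=\exists R.D'$: if $C$ has no $\exists R$ conjunct then $d$ has no $R$-successor and both sides fail; otherwise, fixing a conjunct $\exists R.C'$ of $C$ (unique when $\func(R)\in\K^-$, which is the case for all policy roles), one checks that $C'\sqsubseteq D'$ is again elementary — $C'$ is normalized, $C'\neq\bot$ by rule~2, and interval safety is inherited by sub-concepts — so the induction hypothesis applies to $C'\sqsubseteq D'$; combining it with fact~(ii) and with ``$d\in(\exists R.D')^\I$ iff some such successor satisfies $D'$'' closes the case. For $D=D_1\sqcap D_2$, Algorithm~\ref{alg:sso} recurses on $C\sqsubseteq D_1$ and $C\sqsubseteq D_2$, both again elementary, so the claim is immediate from the induction hypothesis and $D^\I=D_1^\I\cap D_2^\I$.

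The hard part will be the bookkeeping around the canonical model: the whole argument hinges on pinning down exactly which successors $d$ has and with what contents, and this is precisely where the normalization hypothesis earns its keep (rule~4 collapsing multiple functional $\exists R$-conjuncts, rule~5 merging $\exists f$-intervals, rule~6 folding in range restrictions), so I must be careful that the sub-concept reached below an $R$-successor is the one the recursion is applied to and that it is still normalized. The single genuinely new point compared with \cite{DBLP:conf/ijcai/Bonatti18} is the $D=A$ case, where the reachability relation $\sqsubseteq^*$ of the main KB is replaced by an oracle query: here I just need the canonical model to have been defined via $\pos(\O_\K^+)$ in lockstep with the guard of Algorithm~\ref{alg:sso}, after which that case is immediate. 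The other place where a hypothesis is indispensable is the interval case: without interval safety an $f$-successor of $d$ could land inside $[\ell,u]$ while coming from an interval that is neither contained in nor disjoint from $[\ell,u]$, breaking the ``only if'' direction — interval safety is exactly the assumption that excludes this.
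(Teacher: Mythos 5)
Your proposal is correct and follows exactly the route the paper indicates: the paper only states that Lemma~\ref{lem:SSA-evaluates-D} ``can be proved by structural induction on $D$,'' and your case analysis (oracle guard matching the canonical model's definition of $d\in A^\I$, interval safety rescuing the ``only if'' direction of the interval case, the sub-canonical-model property feeding the induction hypothesis in the $\exists R.D'$ case) is a faithful and correctly reasoned elaboration of that induction.
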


\noindent
Finally we can prove that \PLRO (Alg.~\ref{alg:main-O}) is correct and complete.

\begin{theorem}
  \label{thm:PLRO-correct}
  Let \tup{\K,\O,C \sqsubseteq D} be any instance of \PLSO.  Then
  $$\PLRO(\K,C \sqsubseteq D)=\mathtt{true} \mbox{\: iff \:} \K \cup \O \models
  C\sqsubseteq D \,.$$
\end{theorem}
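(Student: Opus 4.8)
The plan is to chain the three reductions performed by Algorithm~\ref{alg:main-O} with the semantic characterizations of Lemmas~\ref{lem:mod-of-norm-C}--\ref{lem:SSA-evaluates-D} and the shifting result of Theorem~\ref{thm:compl-w-shifting}. First I would move the goal onto the shifted knowledge base: since $\tup{\K,\O,C\sqsubseteq D}\in\PLSO$, Theorem~\ref{thm:compl-w-shifting} applies and gives $\K\cup\O\models C\sqsubseteq D$ iff $\K^-\cup\pos(\O_\K^+)\models C\sqsubseteq D$, so from now on every entailment is read over $\K^-\cup\pos(\O_\K^+)$. Next I would account for lines~3--4 of the algorithm. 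The rules of Table~\ref{norm-rules-O} preserve concept equivalence relative to $\K^-\cup\pos(\O_\K^+)$: rules~1--6 are sound given the functionality and range axioms in $\K^-$, and rule~7 is sound because its side condition $\O_\K^+\models A_1\sqcap\dots\sqcap A_n\sqsubseteq\bot$ places this subsumption in $\pos(\O_\K^+)$, forcing $A_1\sqcap\dots\sqcap A_n$ to be empty in every model of $\pos(\O_\K^+)$. Interval splitting is an unconditional equivalence (it rewrites interval constraints into unions of constraints over finer intervals and distributes the disjunctions outward through $\exists R$, so the output stays a union of simple concepts); see \cite{DBLP:conf/ijcai/Bonatti18}. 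Hence $\K^-\cup\pos(\O_\K^+)\models C\sqsubseteq D$ iff $\K^-\cup\pos(\O_\K^+)\models C''\sqsubseteq D$, where $C''=\splt{C'}{D}=C_1\sqcup\dots\sqcup C_m$ and $D=D_1\sqcup\dots\sqcup D_n$.

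The second step passes from the disjunctive query to the individual pairs $C_i\sqsubseteq D_j$ examined by the inner loop. Here I would invoke the properties of $\splt{\cdot}{\cdot}$ from \cite{DBLP:conf/ijcai/Bonatti18}: every $C_i$ is a simple \PL concept, $C''\sqsubseteq D$ is interval safe by construction, and each $C_i$ is still normalized w.r.t.\ \K and \O (splitting enables no rule of Table~\ref{norm-rules-O}: it creates no empty interval and no new occurrence of $\bot$, of a role name, or of a concept name, and it leaves at most one interval constraint per functional property in each disjunct). Consequently each $C_i\sqsubseteq D_j$ is elementary w.r.t.\ \K and \O. Applying Lemma~\ref{lem:convexity-PL} to the interval-safe query $C''\sqsubseteq D$ then gives that $\K^-\cup\pos(\O_\K^+)\models C''\sqsubseteq D$ iff for every $i\in[1,m]$ there is $j\in[1,n]$ with $\K^-\cup\pos(\O_\K^+)\models C_i\sqsubseteq D_j$.

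The third step matches these atomic entailments with the calls $\SSA^{\O_\K^+}(C_i\sqsubseteq D_j)$ in line~3 of Algorithm~\ref{alg:main-O}. When $C_i\neq\bot$, fix a canonical model $(\I,d)$ of $C_i$ (which exists by Lemma~\ref{lem:mod-of-norm-C}); then Lemma~\ref{lem:counterex} yields $\K^-\cup\pos(\O_\K^+)\models C_i\sqsubseteq D_j$ iff $(\I,d)\models D_j$, and Lemma~\ref{lem:SSA-evaluates-D} yields $(\I,d)\models D_j$ iff $\SSA^{\O_\K^+}(C_i\sqsubseteq D_j)=\mathtt{true}$, so these two statements are equivalent. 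When $C_i=\bot$ both sides hold trivially — the entailment is vacuous and line~2 of Algorithm~\ref{alg:sso} returns $\mathtt{true}$ — so the equivalence is uniform over all pairs. Reading off the control flow of Algorithm~\ref{alg:main-O}, it returns $\mathtt{true}$ exactly when for every $i$ some $j$ makes $\SSA^{\O_\K^+}(C_i\sqsubseteq D_j)=\mathtt{true}$; composing this with the equivalences of the three steps gives $\PLRO(\K,C\sqsubseteq D)=\mathtt{true}$ iff $\K\cup\O\models C\sqsubseteq D$.

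I do not expect a conceptual obstacle, since all the heavy lifting is already isolated in the previously stated or cited Lemmas~\ref{lem:mod-of-norm-C}--\ref{lem:SSA-evaluates-D} and Theorem~\ref{thm:compl-w-shifting}; the step that needs the most care is purely a matter of checking that the preconditions of those results hold at every invocation — that $C'$ is simple and normalized, that $\splt{\cdot}{\cdot}$ preserves simplicity, normalization and interval safety so that each $C_i\sqsubseteq D_j$ is elementary and Lemma~\ref{lem:convexity-PL} applies, and that the degenerate case $C_i=\bot$ is handled uniformly. A secondary point worth stating explicitly is the soundness of normalization rule~7 over $\pos(\O_\K^+)$ rather than over $\O_\K^+$ itself, which is where IBQ reasoning is already used, silently, inside the normalization phase.
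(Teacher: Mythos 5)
Your proposal is correct and follows essentially the same route as the paper's own proof: it combines Theorem~\ref{thm:compl-w-shifting}, the equivalence-preservation of normalization and interval splitting, Lemma~\ref{lem:convexity-PL} for the reduction to pairs $C_i\sqsubseteq D_j$, and Lemmas~\ref{lem:counterex} and~\ref{lem:SSA-evaluates-D} (with the $C_i=\bot$ case handled by line~2 of Algorithm~\ref{alg:sso}), merely traversing the chain of equivalences in the opposite order. The extra care you take in checking that splitting preserves elementarity and that rule~7 is sound over $\pos(\O_\K^+)$ only makes explicit what the paper leaves implicit under ``by construction of $C''$.''
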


\begin{proof}
  $D$ is of the form $D_1\sqcup\ldots\sqcup D_n$.
  Let $C_1 \sqcup \ldots \sqcup C_m$ be the concept $C''$ computed by lines 2 and 3 of \PLRO.
  We start by proving the following claim, for all $i=1,\ldots,m$ and  $j=1,\ldots,n$:
  \begin{equation}
    \label{PLRO:1}
    \SSA^{\O_\K^+}(C_i\sqsubseteq D_j) = \easytt{true} \mbox{\:
      iff \:} \K^-\cup\pos(\O_\K^+) \models C_i\sqsubseteq D_j \,.
  \end{equation}
  There are two possibilities. If $C_i=\bot$, then clearly $\K^-\cup\pos(\O_\K^+) \models
  C_i\sqsubseteq D_j$ and $\SSA^{\O_\K^+}(C_i \sqsubseteq
  D_j)=\mathtt{true}$ (see line 2 of Algorithm~\ref{alg:sso}), so
  (\ref{PLRO:1}) holds in this case.  If $C\neq\bot$, then note that
  $C_i\sqsubseteq D_j$ is elementary w.r.t.\ \K and \O by construction
  of $C''$ (which is obtained by splitting the intervals of the
  normalization of $C$ w.r.t.\ \K and \O).  Then (\ref{PLRO:1}) follows immediately from
  lemmas~\ref{lem:counterex} and \ref{lem:SSA-evaluates-D}.

  By (\ref{PLRO:1}) and convexity (Lemma~\ref{lem:convexity-PL}), we
  have that lines 5--11 of Algorithm~\ref{alg:main-O} return
  \easytt{true} iff $\K^- \cup \pos(\O_\K^+) \models C''\sqsubseteq
  D$.  Moreover, $C''$ can be equivalently replaced by $C$ in this
  entailment, since normalization preserves equivalence. The resulting entailment is
  equivalent to $\K \cup \O \models C\sqsubseteq D$ by
  Theorem~\ref{thm:compl-w-shifting}.  It follows that
  Algorithm~\ref{alg:main-O} returns \easytt{true} iff $\K \cup \O
  \models C\sqsubseteq D$. \qed
\end{proof}

Using Algorithm~\ref{alg:main-O}, it can be proved that if the number of
intervals occurring in each simple \PL concept is bounded by a
constant $c$ (as in SPECIAL's policies, where $c=1$), then \PL
subsumption checking with oracles is in
$\mathbf{P}^{\pos(\O^+)}$.\footnote{Here we use the notation for complexity classes with oracles adopted in \cite{DBLP:books/daglib/0018514}.} Consequently, if oracles have a tractable subsumption problem, then \PLSO with bounded occurrences of intervals is tractable, too.


\section{Experimental Evaluation}
\label{sec:experiments}

In this section we describe a Java implementation of \PLRO and compare
its performance with that of other popular engines.
\PLRO is implemented in Java and distributed as a .jar file. The
reasoner's class is named \emph{PLReasonerIBQ}, and provides a partial
implementation of the OWL API interfaces, version 5.x.
The package includes a complete implementation of \PLRO, including the
structural subsumption algorithm \SSO and the two normalization
phases, based on the interval splitting method for interval safety
illustrated in \cite{DBLP:conf/ijcai/Bonatti18} and on the rewrite
rules in Table~\ref{norm-rules-O}.
Several optimizations have been implemented. In the following we will assess  
two versions of \PLRO:
    \vspace*{-15pt}
\subsubsection*{IBQ PLR} 
The basic implementation of \PLRO.
    \vspace*{-15pt}
\subsubsection*{IBQ PLR c}
The calls to the oracle (triggered by normalization rule 6
and line~3 of \SSO) are one of the most expensive parts of the
reasoner. In order to reduce their cost, two caches are
introduced, for remembering the results of the oracle queries executed by the normalization phase and  \SSO, respectively.  This optimization is
expected to be effective due to the nature of the interval
normalization phase, that replicates concepts when intervals are split
to achieve interval safety, thereby inducing a large number of identical oracle queries.

\hide{
\subsubsection*{IBQ PLR c+}
The normalization steps (lines~2 and 3 of \PLR) constitute another 
very expensive part of the reasoning algorithm. 
In order to reduce their cost, two
caches are introduced. The first cache stores the business policies that have
already been normalized w.r.t.\ \K (line~2 of \PLR).  In this way, the
seven rewrite rules are applied to each business policy only once;
when the policy is used again, line~2 simply retrieves the normalized
concept from the cache.
This optimization is expected to be effective in SPECIAL's application
scenarios because only business policies need to be normalized, and
their number is limited.  So the probability of
re-using an already normalized policy is high, and the cache is
not going to grow indefinitly; on the contrary its size is expected to be
moderate.
}

\vspace*{5pt} SPECIAL's engine is tested on sets of experiments where
both the main ontologies \K and the \PL subsumptions that encode
compliance checks are completely synthetic.  On the other hand, the
oracles -- namely MHC, OntolUrgences and SNOMED CT\footnote{Note that
  in order to make sure that the selected ontologies do not violate
  the expressivity constraints of Horn-\SRIQ a couple of axioms that
  make use of more expressive DL constructs have been dropped.} --
have been selected from the real ontologies in the BioPortal
repository.  The choice of these ontologies (unrelated to the GDPR
domain) is justified by two considerations:
\begin{itemize}
\item We need large ontologies in order to assess the scalablity of
  the engine in the increasingly complex scenarios we expect in the
  future. The above oracles have been selected due to their size
  (cf. Table~\ref{vocab}) in order to set up a stress test for
  verifying the scalability of \PLRO as vocabularies grow.
\item \PL is well-suited to the representation of electronic health
  records (EHR), using the standard HL7 to represent the structure of
  EHR, and biomedical ontologies (such as SNOMED) to encode the contents
  of each section (cf.\ \cite{DBLP:conf/dlog/BonattiPS15}). So our experiments evaluate the practical behavior of \PL also in the e-health context.
\end{itemize}

\begin{table}[h]
  \caption{Size of the real world general vocabularies}
  \label{vocab}
  \vspace*{-10pt}
  \begin{center}
    \small
    \begin{tabular}{lrrrr}
      \hline
      & \multicolumn{1}{c}{\normalsize \hspace{15pt} MHC } & \multicolumn{1}{c}{\normalsize \hspace{10pt} OntolUrgences} & \multicolumn{1}{c}{\normalsize \hspace{10pt} SNOMED CT}
      \\
      \hline
      \hline
      Oracle size& & &
      
        \\
        \hline
        \hline
        classes & 7,929 & 10,031 & 350,711
        \\
        roles & 8 & 60 & 120
        \\
        concrete properties & 3 & 1 & 0
        \\
        \func & 1 & 4 & 0
        \\
        \range & 5 & 37 & 0
        \\[2pt]
        \hline
        \\[-6pt]
        \disj & 3 & 17 & 0
        \\
        inclusions & 13623 & 12023 & 239871
        \\
        equivalent classes & 16 & 112 & 111262
         \\       
        avg.\ classification & 8.7 & 11.06 & 11.7 \\[-2pt]
         hierarchy height & & &
         \\
        max\ classification & 12 & 16 & 28 \\[-2pt]
         hierarchy height & & &
      \\
      \hline
    \end{tabular}
  \end{center}
\end{table}

The main knowledge bases \K define policy roles (recall that no role
from the external vocabularies may occur in the business and consent
policies, and that class inclusion axioms are all pushed into the
oracles).  Three ontologies have been generated: one for each of the
three sets of parameters K1--K3 reported in
Table~\ref{synthetic-test-cases}. Approximately half of the
roles and concrete properties are functional, and half of the roles
have a range axiom.  The same table reports the parameters P1, P2 and
P3 used to generate the \PL concepts occurring in the queries. Concept size and nesting dominate the corresponding dimensions of the real-world policies
occurring in SPECIAL's pilots. The three sizes P1, P2 and P3 have
different interval length; this parameter influences the probability
of interval splitting during the normalization phase, that has a major effect on complexity.
In particular, interval splitting may exponentially inflate the given
business policy, and this is unavoidable (unless P=NP) because unrestricted \PL
subsumption checking is coNP-complete
\cite{DBLP:conf/dlog/BonattiPS15}.
If the maximum number of intervals per simple policies (hereafter \ni)
is bounded, then the computation of \splt{P_B}{P_C} takes polynomial
time, but the degree of the polyomial grows with \ni.
Although in SPECIAL's data usage policies $\ni \leq 1$ (since storage
duration is the only interval-valued property), in our experiments we
also analyze the costs of higher values of \ni, in view of possible
future extensions.
Note that the value of \ni is measured \emph{after} normalization,
since the rewrite rules may decrease the number of intervals.

\begin{table}[h]
  \caption{Size of fully synthetic test cases}
  \label{synthetic-test-cases}
    \small
    \begin{minipage}[b][140pt][t]{170pt}
      \begin{tabular}{lrrr}
        \hline
        Main ontology size & \normalsize \hspace{15pt} K1 & \normalsize \hspace{10pt} K2  & \normalsize  \hspace{10pt} K3
        \\
        \hline
        \hline
        classes & 0 & 0 & 0
        \\
        roles & 10 & 30 & 50
        \\
        concrete properties & 5 & 10 & 15
        \\
       avg. \func & 5 & 15 & 25
        \\
       max \func & 9 & 27 & 45
        \\
        \range & 5 & 15 & 25
        \\[2pt]
        \hline
        \\[-6pt]
        \disj & 0 & 0 & 0
        \\
        inclusions & 0 & 0 & 0
        \\
        equivalent classes & 0 & 0 & 0
        \\[2pt]
        \hline
      \end{tabular}
    \end{minipage}
    \quad
    \begin{minipage}[b][140pt][t]{145pt}
      \begin{tabular}{lrrr}
        \hline
        Full concept size & \hspace{10pt} P1 & \hspace{10pt} P2 & \hspace{10pt} P3
        \\
        \hline
        \hline
        \#simple pol. per full pol. & 10 & 10 & 10
        \\
        max \#top-level intersec. & 10 & 10 & 10\\[-2pt]
        per simple subconcept
        \\
        depth (nesting) & 4 & 4 & 4
        \\[2pt]
        \hline
        \hline
        Simple policy size
        \\
        \hline
        \\[-6pt]
        \#atomic classes  & 30 & 30 & 30
        \\
        \#exist. restr. per level & 3 & 3 & 3
        \\
        max.\ \#intervals & 8 & 8 & 8 
        \\
        max interval length&  50 & 80 & 150 
        \\[2pt]
        \hline
      \end{tabular}
    \end{minipage}
\end{table}

In each compliance check $P_B \sqsubseteq P_C$, $P_B$ is a union of
randomly generated simple business policies. 
Policy attributes are specified
by randomly  picking classes from the oracles. We make sure that every simple policy 
generated is internally consistent by discarding inconsistent policies. 
The consent policy $P_C$ is the union of a set of simple policies
$P_C^i$ ($i=1,\ldots,n$) generated by modifying the simple business policies in $P_B$, mimicking a selection of
privacy options from a list provided by the controller. In particular,
a random deletion of conjuncts within a simple policy mimicks the opt-out choices of
data subjects with respect to the various data processing activities modelled
by the simple policies.  
Similarly, the random replacement of terms with a different term 
simulates the opt-in/opt-out choices of the data subject w.r.t.\ each
component of the selected simple policies. More precisely, if the
modified term occurring in $P_C^i$ is a superclass (resp.\ a subclass)
of the corresponding term in the original business policy, then the
data subject opted for a broader (resp.\ more restrictive) permission
relative to the involved policy property (e.g.~data categories,
purpose, and so on). Finally, we also consider the random addition 
of new simple policies (disjunct). 

The number of queries for each size P$_i$ generated for each vocabulary is 3600 (50 different queries for each combination of generation parameters).

The experiments have been performed on a server with an 8-cores processor
Intel Xeon Silver 4110, 11M cache, 198\,GB RAM, running Ubuntu 18.04
and JVM 1.8.0\_181, configured with 32GB heap memory. 
We have \emph{not} exploited parallelism in the engine's implementation.

Concerning the engine used to query the oracle, we used the
specialized engine ELK for SNOMED (that is in OWL2-EL) but we had to
use the general engine Hermit on the other two oracles, since they are
too expressive for ELK.\footnote{In future work we are planning to try also
  specialized engines for Horn DLs, such as GraphDB and RDFox.}

Performance is not affected by the choice of K1, K2, or K3, therefore
we aggregate their results. We aggregate also the results for P1, P2,
and P3, because they influence \ni indirectly; we rather focus on the
actual value of \ni after normalization.

The main experimental results are reported in
Table~\ref{tab-comp}. The optimized version of \PLRO is systematically
faster than Hermit for $\ni<5$.  Speedups range approximately from 3
to 6 times. On a very large oracle like SNOMED, \PLRO is almost two
orders of magnitude faster than Hermit, also for higher values of \ni
(we stopped our experiments at \ni=10). For the two smaller oracles,
the effects of the combinatorial explosion of \splt{P_B}{P_C} become
visible when $\ni=5$ and Hermit turns out to be faster. The growth of response time
is rather slow until $\ni=5$ because the cost caused by the
exponential inflation of $P_B$ is dominated by the cost of oracle
calls, that do not suffer from the combinatorial explosion as
explained in the next paragraph.

\begin{table}[h]
  \caption{Hermit vs IBQ PLR c: average time per subsumption check (ms) }
  \label{tab-comp}
    \small
    \begin{tabular}{lrrrrrrrr}
      \hline
      &    &   \multicolumn{7}{c}{\ni (\# intervals per simple policy)}  
      \\
            &  \hspace{15pt}   & \hspace{15pt} 0 & \hspace{10pt} 1 &  \hspace{10pt}  2&  \hspace{10pt}  3&  \hspace{10pt}  4&  \hspace{10pt} 5&  \hspace{10pt} 6 \\
      \hline
      \hline
        MHA & Hermit & 143.38 & 154.56 & 166.10 & 165.9 & 184.7 & 166.75 & 165.96 
        \\
         & IBQ PLR c  & 21.77 & 25.72 & 30.90 & 36.84 & 68.27 &188.32 & 871.97 
         \\
         &   \%& 15.18 & 16.64 & 18.60 & 22.20 & 36.96 & 112.93 & 525.40 
        \\[2pt]
        \hline
        \hline
         OntolUrgences & Hermit & 150.56 & 162.69 & 168.51 & 176.16 & 167.77 & 179.55 & 171.17 
        \\
         & IBQ PLR c & 33.12 & 36.50 & 41.39 & 52.51 & 74.13 & 184.23 & 1007.84 
         \\
         &  \%& 21.99 & 22.44 & 24.56 & 29.81 & 44.19 & 102.60 & 588.79
        \\[2pt]
        \hline
        \hline
       SNOMED & Hermit & 12806.00 & 14168.52 & 18471.95 & 17730.64 & 18165.25 & 18172.86 & 17588.85 
        \\
         CT & IBQ PLR c & 191.46 & 200.45 & 202.75 & 220.76 & 246.12 & 377.69 & 1057.85
          \\
          &   \%& 1.50 & 1.41 & 1.10 & 1.25 & 1.35 & 2.08 & 6.01 
      \\
      \hline
    \end{tabular}
\end{table}

The effectiveness of the two caches on performance is illustrated in
Table~\ref{tab-opt}. The table illustrates also the explosion caused
by interval splitting as \ni grows. The comparison of the number of
oracle queries issued by the two implementations confirms the
hypothesis that most oracle calls are duplicates caused by interval
splitting (that may create, for each simple policy, multiple versions
that differ only in the intervals). Then the caches keep the number of
oracle queries almost constant.

\begin{table}[h]
  \caption{Effectiveness of optimizations on small/medium and large ontologies}
  \label{tab-opt}
    \small
    \begin{tabular}{lrrr|rr|rrr}
      \hline
                  & \ni & \hspace{5pt} \# disj. & \hspace{10pt} \# disj. &  \hspace{10pt} IBQ PLR & \hspace{10pt} \# oracle  &  \hspace{10pt} IBQ PLR c & \hspace{10pt} \# oracle
      \\
      &   & \hspace{15pt} before & \hspace{10pt} after &  \hspace{10pt} (ms) &  \hspace{10pt} calls &  \hspace{10pt} (ms) & \hspace{10pt} calls &
      \\
            &   & \hspace{15pt} norm & \hspace{10pt} norm &  &   &  &
            
      \\
      \hline
      \hline
        MHA & 0 & 10 & 10 & 130.47 & 1530,23 & 21.77 & 41,64
        \\
         & 1 & 10 & 16,89 & 142.97 & 1756,87 & 25.72 & 41,84
        \\
         & 2 & 10 & 43,03 & 185.76 & 2720,62 & 30.90 & 40,56
        \\
        & 3 & 10 & 153,97 & 340.32 & 6428,17 & 36.84 & 40,98
        \\
         & 4 & 10 & 727,53 & 1043.25 & 23729,95 & 68.27 & 40,67
        \\[2pt]
        \hline
        \hline
        OntolUrgences & 0 & 10 & 10 & 130,80 & 1566,80 & 33,12 & 101,33
        \\
         & 1 & 10 & 16,35 & 145,73 & 1790,14 & 36,50 & 101,34
        \\
         & 2 & 10 & 43,08 & 191,23 & 2783,16 & 41,39 & 101,68
        \\
        & 3 & 10 & 145,19 & 357,16 & 6671,05 & 52,51 & 100,22
        \\
         & 4 & 10 & 653,44 & 1030,05 & 23270,80 & 74,13 & 98,00
        \\[2pt]
        \hline
        \hline
       SNOMED & 0 & 10 & 10 & 359,02& 1597,64 & 191.46 & 104,68
        \\
         CT & 1 & 10 & 16,85 & 395,31& 1845,19 & 200.45 & 106,68
        \\
         & 2 & 10 & 40,32 & 502,34& 2761,74 & 202.75 & 104,64
        \\
        & 3 & 10 & 138,44 & 887,11& 6354,83 & 220.76 & 104,13
        \\
         & 4 & 10 & 677,85 & 2829,61& 25008,38 & 246.12 & 104,54
      \\
      \hline
    \end{tabular}
\end{table}


\section{Related Work}
\label{sec:related}

\PL differs from the tractable profiles of OWL2 and from Horn DLs due
to intervals, that constitute a non-convex domain. \PL without
intervals is a fragment of the tractable DL \emph{Horn-\easycal{SHOIQ}
  with (reuse)-safe} roles
(cf. \cite{DBLP:conf/cade/MartinezFGHH14,DBLP:conf/semweb/CarralFGHH14}).
DLs have been already used as policy languages, e.g.\  \cite{DBLP:conf/policy/UszokBJSHBBJKL03,rei}. These works, however, do not address the encoding of data usage policies, nor the tractability of reasoning.

In
\cite{DBLP:conf/jurix/PalmiraniG18,DBLP:conf/jurix/PalmiraniMRBR18,DBLP:conf/egovis/PalmiraniMRBR18}
the authors propose an ontology, PrOnto, for supporting legal
reasoning and GDPR compliance checking.
Axioms are formulated with nonmonotonic rules interpreted as in corteous logic programming.
The scalability of reasoning is not addressed in these works.
In SPECIAL we favour a DL-based formalization, because it is
particularly well suited to policy comparison (predominant in
SPECIAL's scenarios, and essential for GDPR compliance and data
transfers under sticky policies). In rule-based languages, policy
comparison is generally intractable and even undecidable if rules are
recursive (cf.\ the discussion in \cite{DBLP:conf/datalog/Bonatti10}).
Moreover, SPECIAL is not addressing advanced legal reasoning. For
example, the compliance check w.r.t.\ the GDPR is aimed only at verifying
the policy's internal coherence (e.g.\ does it contain all the
necessary obligations? Is the legal basis appropriate for the data
categories involved?). As a consequence, deontic reasoning and
nonmonotonic reasoning -- frequently adopted in the AI-and-law area --
lie outside the scope of SPECIAL's use cases.


\section{Conclusions}
\label{sec:conclusions}

In summary, IBQ reasoning constitutes an effective approach to
extending \PL reasoning with a wide range of vocabularies, formulated
with more expressive logics. The current experiments are encouraging,
and show that the integration of different reasoners may significantly
increase performance. The benefits of this approach are particularly
visible on very large oracles, such as SNOMED.  This makes the IBQ
approach particularly appealing for reasoning about EHR.

However, the current performance of \PLRO's implementations is not yet
sufficient for SPECIAL's scenarios. To address this issue, we are
going to investigate whether (and to what extent) oracles can be
compiled into \PL knowledge bases, to further speed up reasoning.

In future work we are also going to complete our experimental analysis
by extending the set of test cases, and by evaluating further engines
for querying the oracles, with aprticular attention to the engines
specialized on the Horn fragments of OWL2, such as OWL2-RL and OWL2-QL.


%
%
%
\bibliographystyle{splncs04}
\bibliography{biblio,oracle,bib-prelim,newbiblio}
\end{document}